\documentclass[letterpaper, 10 pt, conference]{ieeeconf}
\IEEEoverridecommandlockouts

\usepackage[colorlinks]{hyperref}
\hypersetup{
    colorlinks,
    linkcolor=black,
    citecolor=black,
    filecolor=magenta,
    urlcolor=cyan,
}

\usepackage{resizegather}

\usepackage{cite}
\usepackage{amsmath,amssymb,amsfonts,mathrsfs}
\usepackage{mdframed}
\usepackage{graphicx}
\usepackage{textcomp}
\usepackage{xcolor}
\usepackage{tcolorbox}
\usepackage{textcomp}
\usepackage{tablefootnote}
\usepackage{threeparttable}
\usepackage{caption, subcaption}
\usepackage{bbm}
\usepackage{dsfont}
\usepackage{tikz}
\usepackage{graphicx}
\usepackage{tkz-euclide}
\usepackage{algorithm}
\usepackage{algpseudocode}
\usepackage{enumerate}
\usepackage{mathtools}
\usepackage{balance}
\usetikzlibrary{positioning}
\usetikzlibrary{shapes}
\usetikzlibrary{shapes.misc}
\usetikzlibrary{shapes.geometric}
\usetikzlibrary{plotmarks}
\usetikzlibrary{intersections}
\usetikzlibrary{calc}
\usetikzlibrary{fit}
\usetikzlibrary{patterns,tikzmark}
\usetikzlibrary{matrix,decorations.pathreplacing,calc}

\tikzset{cross/.style={cross out, draw, 
         minimum size=2*(#1-\pgflinewidth), 
         inner sep=0pt, outer sep=0pt}}

\newcommand{\vertiii}[1]{{\left\vert\kern-0.25ex\left\vert\kern-0.25ex\left\vert #1 
    \right\vert\kern-0.25ex\right\vert\kern-0.25ex\right\vert}}

\newmdtheoremenv[linecolor=white, backgroundcolor=lightgray!15, innertopmargin=5pt, innerbottommargin=5pt, skipabove=10pt, skipbelow=10pt]{theorem}{\textbf{Theorem}}
\newmdtheoremenv[linecolor=white, backgroundcolor=lightgray!15, innertopmargin=5pt, innerbottommargin=5pt, skipabove=10pt, skipbelow=10pt]{corollary}{\textbf{Corollary}}[theorem]
\newmdtheoremenv[linecolor=white, backgroundcolor=lightgray!15, innertopmargin=5pt, innerbottommargin=5pt, skipabove=10pt, skipbelow=10pt]{lemma}{\textbf{Lemma}}

\newmdtheoremenv[linecolor=white, backgroundcolor=lightgray!15, innertopmargin=5pt, innerbottommargin=5pt, skipabove=10pt, skipbelow=10pt]{problem}{\textbf{Problem}}

\newmdtheoremenv[linecolor=white, backgroundcolor=lightgray!15, innertopmargin=5pt, innerbottommargin=5pt, skipabove=10pt, skipbelow=10pt]{definition}{\textbf{Definition}}

\newmdtheoremenv[linecolor=white, backgroundcolor=lightgray!15, innertopmargin=5pt, innerbottommargin=5pt, skipabove=10pt, skipbelow=10pt]{objective}{\textbf{Objective}}

\newmdenv[
    linecolor=white, backgroundcolor=lightgray!15, innertopmargin=5pt, innerbottommargin=5pt, skipabove=10pt, skipbelow=10pt
]{graybox}

\makeatletter
\renewcommand{\fps@figure}{htp}
\renewcommand{\fps@table}{htp}
\makeatother

\def\BibTeX{{\rm B\kern-.05em{\sc i\kern-.025em b}\kern-.08em
    T\kern-.1667em\lower.7ex\hbox{E}\kern-.125emX}}

\title{HJ-SafeDMP: Hamilton-Jacobi Reachability-Guided Dynamic Movement Primitives for Provably Safe Robot Motion}
\author{Siddhanth Ramesh and Ravi Prakash
\thanks{All the authors belong to Cyber-Physical Systems, Indian Institute of Science (IISc), Bengaluru.
{\tt siddhanthramesh587@gmail.com, ravipr@iisc.ac.in}.}
}

\begin{document}

\maketitle

\begin{abstract}
Robots deployed in safety-critical environments must execute motions that are simultaneously robust to disturbances and provably safe from collisions. Dynamic Movement Primitives (DMPs) offer inherent stability, temporal flexibility, and efficient trajectory generalization from single demonstrations, but they lack formal safety certificates. Conversely, Hamilton-Jacobi (HJ) Reachability analysis provides a principled framework for computing worst-case safety margins and forward-invariant safe sets, but classical grid-based methods suffer from the curse of dimensionality and are impractical for real-time control. This paper introduces \textbf{HJ-SafeDMP}, a framework that integrates DMPs with learned HJ Reachability-based safety value functions to achieve provably safe, robust, and computationally efficient robot motion. We learn a Control Barrier Value Function (CBVF) from offline demonstration data using a model-free, finite-difference HJ recursion and deploy it as a real-time safety filter via a closed-form control law that modulates the DMP output. Unlike optimization-based CBF-QP approaches, our method achieves safety filtering without online quadratic program solves, preserving the computational efficiency of DMPs. We further incorporate an expectile-based offline learning objective that avoids querying out-of-distribution actions, and a conformal prediction calibration step that provides finite-sample probabilistic safety coverage. Experimental evaluation on a 7-DOF robot manipulator demonstrates that HJ-SafeDMP achieves formal safety guarantees with orders-of-magnitude faster execution than optimization-based baselines, while maintaining the robustness and adaptability of DMPs for human-robot interaction.
\end{abstract}


\section{Introduction}
\label{section: introduction}

For robots to operate reliably in human-centric environments, they must satisfy two fundamental properties: \emph{robustness}---the capacity to maintain stable, goal-directed behavior under external disturbances---and \emph{safety}---the formal guarantee of avoiding collisions with people, objects, and environmental boundaries~\cite{brunke2022safe}. While both properties are essential for real-world deployment, achieving them simultaneously in a computationally efficient manner remains a central challenge in robotic control~\cite{siciliano2009robotics}.

Dynamic Movement Primitives (DMPs)~\cite{ijspeert2013dynamical, schaal2006dynamic} have emerged as a powerful framework for learning and generalizing robot motions from demonstrations~\cite{ravichandar2020recent}. By encoding trajectories as stable, attractor-based dynamical systems, DMPs provide inherent robustness to perturbations, temporal flexibility, and spatial generalization from a single demonstration~\cite{pastor2009learning, saveriano2023dynamic}. These properties make DMPs particularly attractive for human-robot interaction (HRI), where motions must adapt in real time to changing conditions. However, DMPs fundamentally lack a principled mechanism for enforcing safety. They are typically paired with reactive, heuristic methods such as Artificial Potential Fields (APFs)~\cite{hoffmann2009biologically, khatib1986real}, which are susceptible to local minima and oscillations, providing no formal safety guarantees.

On the other hand, control-theoretic methods provide rigorous foundations for safety. Control Barrier Functions (CBFs)~\cite{ames2017CBF, Ames2019CBF, wieland2007constructive} guarantee forward invariance of a safe set through constrained optimization, but require solving a Quadratic Program (QP) at every control step---imposing significant computational overhead that hinders high-frequency control~\cite{taylor2020learning}. More fundamentally, constructing valid CBFs for complex systems remains a challenging open problem~\cite{dawson2023safe}.

Hamilton-Jacobi (HJ) Reachability analysis~\cite{bansal2017hamilton, Fisac2019HJSafety, herbert2017fastrack} offers a principled framework for computing worst-case safety margins over future trajectories by solving a Hamilton-Jacobi-Bellman (HJB) partial differential equation. The resulting \emph{safety value function} naturally encodes forward-invariant safe sets and has been applied to safe planning in robotics~\cite{fisac2018general, chen2018decomposition}. When combined with CBFs, this yields the Control Barrier Value Function (CBVF)~\cite{choi2021robust}, which inherits formal safety guarantees while being amenable to learning-based approximation. Recent work has demonstrated that HJ-inspired value functions can be learned entirely from offline data using model-free Bellman recursions~\cite{tayal2026vocbf}, expectile regression~\cite{kostrikov2022offline}, and conformal prediction calibration~\cite{tayal2026safefql, lindemann2025formal}, bypassing the curse of dimensionality of grid-based methods~\cite{Mitchell2005ATO} while avoiding online interaction.

This paper introduces \textbf{HJ-SafeDMP}, a framework that resolves the trade-off between the computational efficiency of DMPs and the formal safety guarantees of HJ Reachability. Our key insight is to learn a CBVF from offline demonstration data via a finite-difference HJ recursion and deploy it as a \emph{closed-form safety modulation layer} that adjusts the DMP output in real time---without requiring online QP solves. The CBVF encodes worst-case safety margins that propagate information about future obstacle encounters backward in time, enabling proactive rather than reactive avoidance. This yields a system that is simultaneously robust (via DMPs), formally safe (via the learned CBVF), and computationally efficient (via closed-form control).

Our main contributions are:
\begin{enumerate}
    \item We propose \textbf{HJ-SafeDMP}, a novel framework that integrates Dynamic Movement Primitives with learned Hamilton-Jacobi Reachability-based safety value functions, achieving provably safe motion generation without online optimization.
    \item We derive a \textbf{model-free, finite-difference CBVF recursion} from HJ Reachability theory that enables learning safety certificates entirely from offline demonstration data, using expectile regression to handle out-of-distribution actions.
    \item We introduce a \textbf{closed-form safety modulation law} that filters DMP outputs using the learned CBVF gradient, replacing the computational overhead of CBF-QP with a single gradient evaluation per control step.
    \item We demonstrate through extensive experiments on a 7-DOF manipulator that HJ-SafeDMP achieves formal safety guarantees with orders-of-magnitude faster execution than optimization-based baselines, while maintaining the robustness and adaptability of DMPs.
\end{enumerate}

\section{Related Works}
\label{section: related_work}

\subsection{Motion Primitives for Robot Control}

Learning from Demonstration (LfD)~\cite{ravichandar2020recent} has produced a rich family of motion primitive frameworks for encoding and generalizing robot skills. Dynamic Movement Primitives (DMPs)~\cite{ijspeert2013dynamical, schaal2006dynamic} encode trajectories as attractor-based dynamical systems with inherent stability and generalization from single demonstrations, and have been widely adopted in manipulation~\cite{pastor2009learning} and locomotion tasks. A comprehensive survey of DMP variants is provided in~\cite{saveriano2023dynamic}. Alternative representations include Gaussian Mixture Models (GMMs)~\cite{jaquier2019learning}, Probabilistic Movement Primitives (ProMPs)~\cite{paraschos2013probabilistic}, Kernelized Movement Primitives (KMPs)~\cite{huang2017kernelized}, and stable dynamical systems learned via Lyapunov constraints~\cite{khansari2011learning}. More recent approaches employ neural ODEs for trajectory learning~\cite{nawaz2024learning} and diffusion-based generative models for policy representation~\cite{janner2022planning}. While these methods offer various trade-offs in expressiveness and data efficiency, DMPs remain uniquely attractive for their closed-form dynamics, temporal scaling, and robustness to perturbations. However, none of these frameworks provide formal safety guarantees; they are typically augmented with reactive avoidance strategies such as APFs~\cite{hoffmann2009biologically, khatib1986real}, which lack formal certificates and suffer from local minima.

\subsection{Safety in Robot Control}

Ensuring safety in robot control has been approached from multiple perspectives~\cite{brunke2022safe}. Formal safety methods based on Control Barrier Functions (CBFs)~\cite{ames2017CBF, Ames2019CBF, ames2014control, wieland2007constructive} guarantee forward invariance of safe sets through constrained optimization (CBF-QP). CBFs have been applied to adaptive cruise control~\cite{ames2014control}, bipedal walking~\cite{hsu2015control, nguyen2015safety}, aerial robotics~\cite{singletary2022onboard, tayal2024control}, and safe imitation learning~\cite{cosner2022end, ciftci2024safe}. Learning-based approaches for constructing CBFs include neural CBFs~\cite{dawson2022safe, dawson2023safe, Alexander2020NCBF}, physics-informed methods~\cite{tayal2025physics}, formally verified certificates~\cite{tayal2024learning, NEURIPS2023_120ed726, abate2021fossil}, and conservative offline approaches~\cite{tabbara2025CCBF, Fernando2023iDBF}. Frameworks combining dynamical systems with CBFs and CLFs~\cite{nawaz2024learning} provide provable guarantees but rely on computationally expensive online optimization, limiting their applicability to high-frequency control. Methods like ROAM~\cite{huber2023avoidance} and Spatio-Temporal Tubes~\cite{das2025spatiotemporal} offer closed-form alternatives but with less explicit safety margins or hand-designed boundaries. Model Predictive Control (MPC)~\cite{MAYNE2000789} provides another avenue for safe planning but incurs significant online computation.

\subsection{Hamilton-Jacobi Reachability for Safety}

HJ Reachability~\cite{bansal2017hamilton, Fisac2019HJSafety} provides the most principled framework for computing backward reachable sets and worst-case safety margins. It has been applied to safe motion planning via FaSTrack~\cite{herbert2017fastrack}, multi-agent systems~\cite{chen2018decomposition}, and general safety frameworks for learning-based control~\cite{fisac2018general}. The Control Barrier Value Function (CBVF)~\cite{choi2021robust} unifies HJ Reachability with CBFs, providing formal forward invariance guarantees. The state-constrained optimal control perspective has been explored via epigraph reformulations~\cite{Altarovici2013HJ, Hao2024csl}. Classical grid-based methods~\cite{Mitchell2005ATO} scale exponentially with dimension, motivating neural approximations. Recent work has demonstrated that HJ-inspired safety value functions can be learned from offline data using finite-difference recursions (V-OCBF~\cite{tayal2026vocbf}), reachability-inspired Bellman backups with flow policy distillation (SafeFQL~\cite{tayal2026safefql}), and epigraph reformulations with flow matching (EpiFlow~\cite{tayal2026epiflow}). These approaches use expectile regression~\cite{kostrikov2022offline} to avoid out-of-distribution action queries and achieve strong empirical safety. Conformal prediction~\cite{shafer2008tutorial, lindemann2025formal, tayal2025cp} has emerged as a tool for providing finite-sample probabilistic safety coverage. Offline RL methods~\cite{levine2020OffRL, kumar2020CQL} and safe offline RL benchmarks~\cite{liu2024dsrl, ji2023safety} provide the data and evaluation infrastructure for these approaches. Our work is the first to integrate such learned HJ Reachability-based safety certificates with Dynamic Movement Primitives for real-time safe robot motion.

\section{Background and Problem Setup}
\label{section: background}

\subsection{Dynamic Movement Primitives (DMPs)}
\label{sec:dmp}

DMPs~\cite{ijspeert2013dynamical, schaal2006dynamic} model robot trajectories as a second-order dynamical system with a learned nonlinear forcing term. They have been extensively used in manipulation~\cite{pastor2009learning}, locomotion, and human-robot interaction~\cite{saveriano2023dynamic}. The dynamics are governed by:
\begin{equation}
\tau^2 \ddot{x} = \alpha_z(\beta_z(g - x) - \tau \dot{x}) + f(z),
\label{eq:dmp}
\end{equation}
where $x(t) \in \mathbb{R}^d$ is the trajectory, $g \in \mathbb{R}^d$ is the goal, $\tau > 0$ is a temporal scaling factor, $\alpha_z, \beta_z > 0$ are critically-damped gains ($\beta_z = \alpha_z / 4$), and $f(z)$ is the forcing function parameterized by Gaussian basis functions, driven by a canonical phase variable:
\begin{equation}
\tau \dot{z} = -\alpha_z z, \quad z(0) = 1.
\label{eq:phase}
\end{equation}

DMPs provide: (i) \textbf{stable convergence} to the goal via attractor dynamics; (ii) \textbf{spatial scaling} by modifying $x_0$ and $g$; (iii) \textbf{temporal scaling} via $\tau$; and (iv) \textbf{robustness} to perturbations via the spring-damper structure. The forcing function $f(z)$ is learned from a single demonstration by inverting~\eqref{eq:dmp}.

\subsection{Control Barrier Functions (CBFs)}
\label{sec:cbf}

Given a continuously differentiable function $B: \mathcal{X} \to \mathbb{R}$, the safe set is $\mathcal{C} = \{x \in \mathcal{X} : B(x) \geq 0\}$.

\begin{definition}[Control Barrier Function~\cite{Ames_2017}]
For a control-affine system $\dot{x} = f(x) + g(x)u$, the function $B$ is a CBF on $\mathcal{X}$ if there exists an extended class-$\mathcal{K}$ function $\kappa$ such that:
\begin{equation}
\max_{u \in \mathbb{U}} \left[\mathcal{L}_f B(x) + \mathcal{L}_g B(x)\, u + \kappa(B(x))\right] \geq 0, \quad \forall\, x \in \mathcal{X},
\label{eq:cbf}
\end{equation}
where $\mathcal{L}_f B = \frac{\partial B}{\partial x} f(x)$ and $\mathcal{L}_g B = \frac{\partial B}{\partial x} g(x)$ are Lie derivatives.
\end{definition}

A standard CBF-QP safety filter minimally adjusts a reference controller:
\begin{equation}
\pi_{\mathrm{safe}}(x) = \min_{u \in \mathbb{U}} \|u - \pi_{\mathrm{ref}}(x)\|^2 \;\; \text{s.t.} \;\; \mathcal{L}_f B(x) + \mathcal{L}_g B(x)\, u + \kappa(B(x)) \geq 0.
\label{eq:cbf_qp}
\end{equation}

While this provides formal safety, solving~\eqref{eq:cbf_qp} at every control step imposes significant computational overhead~\cite{singletary2022onboard}, and constructing valid $B$ is non-trivial~\cite{dawson2023safe, taylor2020learning}.

\subsection{Hamilton-Jacobi Reachability and CBVFs}
\label{sec:hj}

HJ Reachability~\cite{bansal2017hamilton, Fisac2019HJSafety} computes the set of states from which safety can be guaranteed under worst-case conditions. It has been applied to safe motion planning~\cite{herbert2017fastrack}, multi-agent control~\cite{chen2018decomposition}, and learning-based safety~\cite{fisac2018general}. Given a safety specification $\ell: \mathbb{R}^n \to \mathbb{R}$ where $\mathcal{F} = \{x : \ell(x) < 0\}$ is the failure set, the \emph{Control Barrier Value Function (CBVF)}~\cite{choi2021robust} is defined as the viscosity solution of the HJB variational inequality:
\begin{equation}
\min\left\{\max_{u \in \mathbb{U}} \nabla B(x) \cdot (f(x) + g(x)\, u),\;\; \ell(x) - B(x)\right\} = 0.
\label{eq:hjb_vi}
\end{equation}

The CBVF $B(x)$ induces a forward-invariant safe set $\mathcal{C} = \{x : B(x) \geq 0\}$ and ensures that admissible controls satisfy:
\begin{equation}
\max_{u \in \mathbb{U}} \left[\mathcal{L}_f B(x) + \mathcal{L}_g B(x)\, u + \kappa(B(x))\right] \geq 0.
\label{eq:cbvf_condition}
\end{equation}

\textbf{Discrete-time recursion.} For learning from offline data, the CBVF admits a finite-difference approximation~\cite{tayal2026vocbf}:
\begin{equation}
B(x_t) = \min\left\{\ell(x_t),\; \max_{u_t} B(x_{t+1})\right\}, \quad \forall\, t \in \{0, 1, \ldots\}.
\label{eq:cbvf_recursion}
\end{equation}

This recursion encodes the principle that a state is safe if and only if its successor is safe or it lies outside the failure set.

\subsection{Problem Statement}

\begin{problem}[Safe DMP Execution]
Given a DMP learned from a single demonstration and an environment with static and dynamic obstacles, synthesize a control law $u(t)$ such that:
\begin{enumerate}
    \item The robot trajectory converges to the goal $g$ (robustness);
    \item The trajectory remains within the safe set $\mathcal{C} = \{x : B(x) \geq 0\}$ at all times (safety);
    \item The control law is computable in closed-form without online optimization (efficiency).
\end{enumerate}
\end{problem}

\section{Methodology}
\label{section: method}

We propose HJ-SafeDMP, a two-stage framework: (i) learning a safety value function from offline demonstrations using HJ Reachability-inspired recursions, and (ii) integrating this learned safety certificate with DMPs via a closed-form modulation law for real-time safe execution. Figure~\ref{fig:framework} illustrates the overall pipeline.

\begin{figure*}[t]
    \centering
    \includegraphics[width=\textwidth]{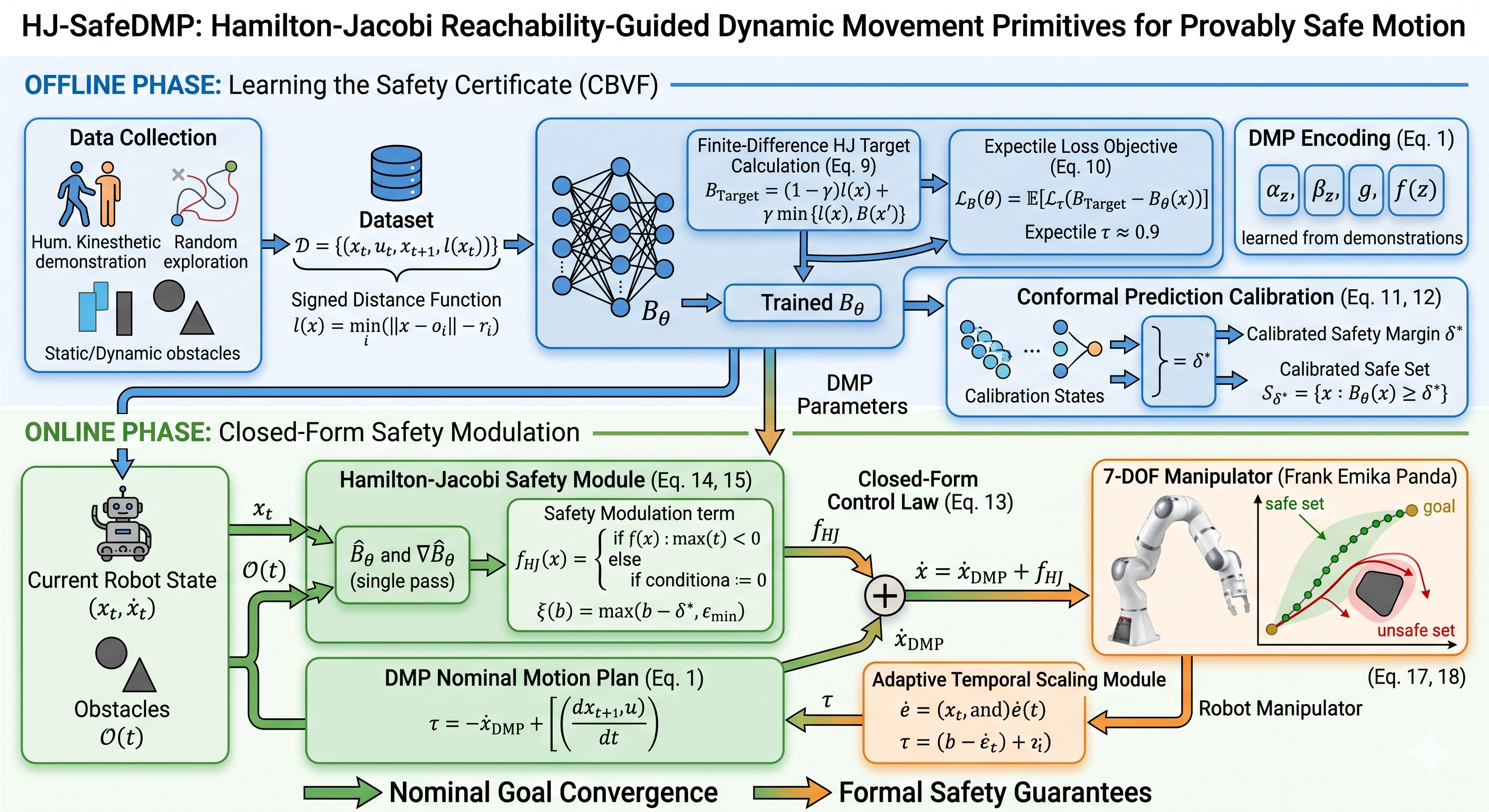}
    \caption{HJ-SafeDMP framework pipeline. (a)~Demonstrations are recorded and encoded as DMPs. (b)~A CBVF is learned offline from demonstration data via finite-difference HJ recursion with expectile regression. (c)~During execution, the learned CBVF modulates DMP outputs in closed form to enforce safety while preserving goal convergence.}
    \label{fig:framework}
\end{figure*}

\subsection{Stage 1: Learning the CBVF from Offline Data}
\label{sec:learning}

\subsubsection{Data Collection}
We collect an offline dataset $\mathcal{D} = \{(x_t, u_t, x_{t+1})\}$ from kinesthetic demonstrations and exploratory trajectories in the workspace. Each transition includes the end-effector state $x_t \in \mathbb{R}^d$, the applied control $u_t$, and the successor state $x_{t+1}$. We also compute the safety specification $\ell(x_t)$ for each state, encoding the signed distance to obstacles:
\begin{equation}
\ell(x) = \min_i \left(\|x - o_i\| - r_i\right),
\label{eq:safety_spec}
\end{equation}
where $o_i$ and $r_i$ are the center and radius of the $i$-th obstacle. States with $\ell(x) < 0$ are inside obstacles (unsafe).

\subsubsection{Finite-Difference CBVF Recursion}

Following the HJ Reachability framework (Sec.~\ref{sec:hj}), we approximate the continuous-time HJB-VI~\eqref{eq:hjb_vi} using the finite-difference barrier recursion from~\cite{tayal2026vocbf}. The discounted recursion target is:
\begin{equation}
B_{\mathrm{Target}}^{\mathcal{D}} = (1 - \gamma)\, \ell(x) + \gamma\, \min\left\{\ell(x),\; B_\psi(x')\right\},
\label{eq:target}
\end{equation}
where $x' = x_{t+1}$ is the observed successor, $\gamma \to 1$ is a discount factor ensuring contraction~\cite{Fisac2019HJSafety}, and $B_\psi$ is a neural network parameterizing the CBVF. This recursion propagates worst-case safety information backward through time: a state is marked unsafe if \emph{any} future state along the trajectory enters the failure set.

\subsubsection{Expectile-Based Learning Objective}

In the offline setting, maximizing over all possible actions in~\eqref{eq:cbvf_recursion} is infeasible because the dataset supports only a restricted action set. Following~\cite{tayal2026vocbf, kostrikov2022offline}, we employ \emph{expectile regression} to approximate the maximization over dataset-supported actions without querying out-of-distribution (OOD) inputs:
\begin{equation}
\mathcal{L}_B(\theta) = \mathbb{E}_{(x, u, x') \sim \mathcal{D}}\left[\mathcal{L}_\tau\left(B_{\mathrm{Target}}^{\mathcal{D}}(x) - B_\theta(x)\right)\right],
\label{eq:expectile_loss}
\end{equation}
where $\mathcal{L}_\tau(y) = |\tau - \mathds{1}(y < 0)|\, y^2$ is the $\tau$-expectile loss~\cite{ilya2022IQL}. For $\tau > 0.5$, this loss upweights positive residuals, causing $B_\theta$ to track the upper envelope of dataset-supported safety values. This yields a CBVF that reflects the \emph{best achievable safety} under data-supported actions, rather than the average behavior-policy safety.

\subsubsection{Conformal Prediction Calibration}

Due to finite-data approximation errors, the learned CBVF boundary $\{x : B_\theta(x) = 0\}$ may be miscalibrated~\cite{dawson2023safe}. Following~\cite{tayal2026safefql, shafer2008tutorial, lindemann2025formal}, we apply a conformal prediction calibration step~\cite{angelopoulos2022gentle, tayal2025cp}. We compute the rollout safety value $V_c^\pi(x_i)$ for a set of $N_s$ calibration states sampled from $\{x : B_\theta(x) \leq \delta\}$ and find the tightest correction margin:
\begin{equation}
\delta^* = \min_{\hat{x}_j} \left\{B_\theta(x_j) : V_c^\pi(x_j) \geq 0\right\},
\label{eq:conformal}
\end{equation}
such that, with probability $1 - \beta_s$:
\begin{equation}
\Pr_{x_i \in \mathcal{S}_{\delta^*}}\left(V_c^\pi(x_i) < 0\right) \geq 1 - \epsilon_s.
\end{equation}
The calibrated safe set $\mathcal{S}_{\delta^*} = \{x : B_\theta(x) \geq \delta^*\}$ provides finite-sample probabilistic safety coverage.

\subsection{Stage 2: HJ-SafeDMP --- Closed-Form Safety Modulation}
\label{sec:integration}

\subsubsection{DMP as Nominal Motion Plan}

The nominal motion plan is encoded as a DMP~\eqref{eq:dmp} learned from a single kinesthetic demonstration~\cite{ravichandar2020recent}. The DMP generates a reference trajectory $x_{\mathrm{DMP}}(t)$ and velocity $\dot{x}_{\mathrm{DMP}}(t)$ with inherent robustness to perturbations via the critically-damped attractor dynamics.

\subsubsection{CBVF-Based Safety Modulation}

Instead of solving a CBF-QP~\eqref{eq:cbf_qp} at each timestep, we use the gradient of the learned CBVF $B_\theta(x)$ to construct a \emph{closed-form safety modulation term} $f_{\mathrm{HJ}}$ that is additively integrated into the DMP dynamics:
\begin{equation}
\tau^2 \ddot{x} = \underbrace{\alpha_z(\beta_z(g - x) - \tau \dot{x}) + f(z)}_{\text{DMP (robustness)}} + \underbrace{f_{\mathrm{HJ}}(x, \dot{x}, \mathcal{O})}_{\text{HJ safety modulation}},
\label{eq:hj_safedmp}
\end{equation}
where the safety modulation term is defined as:
\begin{equation}
f_{\mathrm{HJ}}(x) = \begin{cases}
\displaystyle \frac{k_s}{\xi(B_\theta(x))} \cdot \nabla_x B_\theta(x), & \text{if } B_\theta(x) < B_{\mathrm{thresh}}, \\[8pt]
0, & \text{otherwise},
\end{cases}
\label{eq:safety_mod}
\end{equation}
with
\begin{equation}
\xi(b) = \max(b - \delta^*, \epsilon_{\min}),
\label{eq:xi}
\end{equation}
where $k_s > 0$ is a safety gain, $B_{\mathrm{thresh}}$ is an activation threshold, $\delta^*$ is the conformal calibration margin from~\eqref{eq:conformal}, and $\epsilon_{\min} > 0$ is a small constant preventing division by zero.

\textbf{Intuition.} The modulation term~\eqref{eq:safety_mod} acts as a \emph{barrier-gradient repulsion}: when the state approaches the boundary of the safe set ($B_\theta(x) \to \delta^*$), the term $1/\xi(B_\theta(x))$ grows, producing a strong corrective force in the direction of increasing safety ($\nabla_x B_\theta$). Deep inside the safe set ($B_\theta(x) \gg B_{\mathrm{thresh}}$), the modulation is zero and the DMP executes unmodified. This mirrors the logarithmic barrier structure of Spatio-Temporal Tubes~\cite{das2025spatiotemporal} but replaces hand-designed tube boundaries with learned HJ Reachability-based safety certificates that encode future safety information.

\subsubsection{Dynamic Obstacle Handling}

For dynamic obstacles $\mathcal{O}(t)$ with observed positions $o_c(t)$ and velocities $\dot{o}_c(t)$, we update the safety specification online:
\begin{equation}
\ell(x, t) = \min_i \left(\|x - o_{c,i}(t)\| - r_{\mathrm{safe},i}\right),
\end{equation}
and re-evaluate $B_\theta(x)$ at each step. Since $B_\theta$ is a neural network, this evaluation is a single forward pass with negligible computational cost.

\subsubsection{Adaptive Temporal Scaling}

Following~\cite{safedmps2026}, we adapt the temporal scaling parameter $\tau$ to handle large deviations caused by safety modulation:
\begin{align}
\dot{e} &= \alpha_e (x_{\mathrm{actual}} - x_{\mathrm{DMP}} - e), \label{eq:error_filter} \\
\tau &= \tau_{\mathrm{nominal}} + k_c \|e\|^2. \label{eq:tau_adapt}
\end{align}
When safety modulation causes large deviations, $\tau$ increases (slowing execution), allowing more time for recovery; when deviations are small, $\tau$ decreases (speeding up), ensuring smooth and bounded recovery.

\subsubsection{Safety Guarantee}

\begin{theorem}[Forward Invariance of HJ-SafeDMP]
\label{thm:safety}
Consider the closed-loop system~\eqref{eq:hj_safedmp} with the CBVF $B_\theta$ satisfying the finite-difference barrier condition~\eqref{eq:cbvf_recursion}. If $B_\theta(x(0)) \geq \delta^*$ and the safety modulation gain $k_s$ is sufficiently large, then $B_\theta(x(t)) \geq \delta^*$ for all $t \geq 0$, i.e., the trajectory remains within the calibrated safe set $\mathcal{S}_{\delta^*}$.
\end{theorem}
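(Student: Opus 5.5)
The plan is a barrier (Lyapunov-type) argument on $h(t) := B_\theta(x(t)) - \delta^*$, which we want to keep nonnegative. Since the closed loop~\eqref{eq:hj_safedmp} is second order in $x$, safety of $x$ cannot be enforced directly by an acceleration term. I would therefore work with the augmented state $(x,\dot x)$ and an energy-like certificate. I would also strengthen the hypotheses slightly, adding bounds we need and which hold in the setting of the paper:
\begin{itemize}
\item $\nabla_x B_\theta$ is Lipschitz and bounded away from zero on the band $\{\delta^* \le B_\theta \le B_{\mathrm{thresh}}\}$, say $\|\nabla_x B_\theta\|\ge c>0$.
\item The nominal DMP acceleration $a_{\mathrm{DMP}} = \alpha_z(\beta_z(g-x)-\tau\dot x)+f(z)$ is bounded by a constant $M$ along trajectories. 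This follows from boundedness of $f(z)$ (Gaussian bases, $z\to0$) and of $g-x$, $\dot x$, which the damping provides.
\end{itemize}
The role of the recursion~\eqref{eq:cbvf_recursion} is only to certify that $\{B_\theta \ge \delta^*\}$ is a viable set: from each of its points some admissible motion keeps $B_\theta$ nondecreasing at the boundary. This is the discrete analogue of~\eqref{eq:cbvf_condition}.

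The key steps are as follows.
\begin{enumerate}
\item Compute $\dot h = \nabla B_\theta \cdot \dot x$ and
\[
\ddot h = \dot x^\top \nabla^2 B_\theta\, \dot x + \tau^{-2}\nabla B_\theta\cdot\bigl(a_{\mathrm{DMP}} + f_{\mathrm{HJ}}\bigr).
\]
\item In the activation band, substitute~\eqref{eq:safety_mod} and~\eqref{eq:xi}. This gives
\[
\ddot h \ge -L\|\dot x\|^2 - \tau^{-2} M\|\nabla B_\theta\| + \frac{k_s c^2}{\tau^2\,\xi(h+\delta^*)}.
\]
When $h$ is small, $\xi = \max(h,\epsilon_{\min})$.
\item Consider the energy function
\[
E = \tfrac12 \dot h^2 - \frac{k_s c^2}{\tau^2}\log\xi + \Bigl(\tfrac{L\bar v^2}{1}+\tfrac{M\bar G}{\tau^2}\Bigr)h,
\]
where $\bar v$ and $\bar G$ bound the velocity and gradient. $E$ is nonincreasing whenever $\dot h<0$, using the inequality from step~2. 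Then $h\to0$ would force $-\log\xi\to\infty$, contradicting the boundedness of $E$ from its initial value.
\end{enumerate}
This contradiction is exactly what requires $k_s$ large, relative to $M\bar G$, $L\bar v^2$, and the initial approach speed $\dot h(0)^2$. Outside the band, $f_{\mathrm{HJ}}=0$ and $h \ge B_{\mathrm{thresh}}-\delta^*>0$, so nothing is needed there. Continuity of trajectories means any exit from $\mathcal{S}_{\delta^*}$ must pass through the band. I would make this rigorous with a first-exit-time argument, or with Nagumo's theorem applied to the augmented set.

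The main obstacle is the saturation at $\epsilon_{\min}$. Once $h<\epsilon_{\min}$, the repulsion stops growing, so the logarithmic barrier argument only shows that $h$ stays above roughly $\epsilon_{\min}e^{-C/k_s}$, not that $h\ge 0$ exactly. I would handle this by choosing $k_s$ large enough that the saturated force $k_s c^2/(\tau^2\epsilon_{\min})$ already dominates $M\bar G + L\bar v^2\tau^2$. Then $\ddot h>0$ throughout $\{h\le\epsilon_{\min}\}$. Combined with the energy bound, which ensures the incoming velocity $\dot h$ is bounded when $h$ reaches $\epsilon_{\min}$, the quadratic estimate $h(t)\ge \epsilon_{\min}-\dot h^2/(2\underline{a})$ gives $h>0$ when $k_s$ is large.

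A second delicate point is the time-varying $\tau$ from~\eqref{eq:tau_adapt}, which I would absorb through the bound $\tau\ge\tau_{\mathrm{nominal}}$. A third is that the statement only asserts the finite-difference condition~\eqref{eq:cbvf_recursion} on $B_\theta$. I would use that condition only to justify the nondegeneracy of the gradient near the boundary, and state it explicitly as an assumption if it cannot be derived.
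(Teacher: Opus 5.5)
Your route differs from the paper's. The paper's sketch writes $\dot B_\theta=\nabla_x B_\theta^\top\bigl(\dot x_{\mathrm{DMP}}+\frac{k_s}{\xi(B_\theta)}\nabla_x B_\theta\bigr)$. That is, it treats $f_{\mathrm{HJ}}$ as a velocity correction, and concludes $\dot B_\theta>0$ near the boundary because the saturated term $\frac{k_s}{\epsilon_{\min}}\|\nabla_x B_\theta\|^2$ dominates. You are right that in \eqref{eq:hj_safedmp} and Algorithm~\ref{alg:hjsafedmp} the term enters the acceleration, so a relative-degree-two argument is what is actually needed.

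Your version, however, has a genuine gap exactly where ``$k_s$ sufficiently large'' is used. You require $k_s$ to dominate $M\bar G$ and $L\bar v^2$. But $\bar v$ and $M$ (which contains $\alpha_z\tau\dot x$) are bounds along the closed-loop trajectories. Damping alone does not make them independent of $k_s$, because $f_{\mathrm{HJ}}$ has magnitude up to $k_s\|\nabla_x B_\theta\|/\epsilon_{\min}$. Indeed, if $x(0)$ lies in the activation band, which the theorem allows, the repulsion converts a potential of order $k_s$ into kinetic energy, so $\bar v^2$ can grow like $k_s$. Then $L\bar v^2$ is of the same order as $k_s c^2/(\tau^2\epsilon_{\min})$. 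Enlarging $k_s$ therefore no longer yields $\underline a>0$, nor a $k_s$-uniform bound on the entry speed $\dot h$ used in your saturation patch. As written, the argument is circular.

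The missing idea is that $f_{\mathrm{HJ}}$ is conservative. Let $\Phi(b)=\int_b^{B_{\mathrm{thresh}}}\xi(s)^{-1}\,ds$ for $b<B_{\mathrm{thresh}}$ and $\Phi(b)=0$ otherwise. Then $f_{\mathrm{HJ}}=-\nabla_x U$ with $U=k_s\Phi(B_\theta(x))\ge 0$, and $\Phi$ is strictly decreasing below $B_{\mathrm{thresh}}$. For fixed $\tau$, the energy $\mathcal{E}=\frac{\tau^2}{2}\|\dot x\|^2+\frac{\alpha_z\beta_z}{2}\|x-g\|^2+U(x)$ satisfies $\dot{\mathcal{E}}=-\alpha_z\tau\|\dot x\|^2+\dot x^\top f(z)\le\|f(z)\|^2/(4\alpha_z\tau)$. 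Hence $U(x(t))\le\mathcal{E}(0)+C_f$, where $C_f<\infty$ for the standard forcing term proportional to the exponentially decaying phase $z$. So if $B_\theta(x(0))>\delta^*$ and $k_s\bigl(\Phi(\delta^*)-\Phi(B_\theta(x(0)))\bigr)>\frac{\tau^2}{2}\|\dot x(0)\|^2+\frac{\alpha_z\beta_z}{2}\|x(0)-g\|^2+C_f$, then $U(x(t))<k_s\Phi(\delta^*)$, i.e.\ $B_\theta(x(t))>\delta^*$, for all $t$.

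This argument buys you several things:
\begin{itemize}
\item It replaces your steps 1--3 and the saturation patch, and gives the $k_s$-uniform velocity bound you assumed.
\item It uses $\nabla_x B_\theta$ only through the chain rule. You need neither $\|\nabla_x B_\theta\|\ge c$, which \eqref{eq:cbvf_recursion} (a statement about values, not gradients) could not have supplied, nor a Lipschitz gradient, which is unavailable for the paper's ReLU network.
\item It shows that the statement must be amended. If $B_\theta(x(0))=\delta^*$ and $\nabla_x B_\theta(x(0))^\top\dot x(0)<0$, then $h(t)<0$ for small $t>0$ whatever $k_s$ is, since $f_{\mathrm{HJ}}$ acts only on $\ddot x$. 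The hypothesis should therefore be $B_\theta(x(0))>\delta^*$, with $k_s$ allowed to depend on $(x(0),\dot x(0))$, exactly as the energy inequality delivers.
\end{itemize}

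Finally, the adaptive $\tau$ of \eqref{eq:tau_adapt} adds $\tau\dot\tau\|\dot x\|^2$ to $\dot{\mathcal{E}}$ and needs a separate bound. The bound $\tau\ge\tau_{\mathrm{nominal}}$ alone does not handle it.
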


\begin{proof}[Proof Sketch]
Along solutions of~\eqref{eq:hj_safedmp}, the time derivative of $B_\theta$ satisfies:
\begin{equation}
\dot{B}_\theta = \nabla_x B_\theta^\top \dot{x} = \nabla_x B_\theta^\top \left(\dot{x}_{\mathrm{DMP}} + \frac{k_s}{\xi(B_\theta)} \nabla_x B_\theta\right).
\end{equation}
Near the boundary ($B_\theta \to \delta^*$), $\xi \to \epsilon_{\min}$ and the term $\frac{k_s}{\epsilon_{\min}} \|\nabla_x B_\theta\|^2$ dominates, ensuring $\dot{B}_\theta > 0$. This implies that $B_\theta$ is strictly increasing whenever the state approaches the boundary, preventing exit from $\mathcal{S}_{\delta^*}$.
\end{proof}

The complete procedure is summarized in Algorithm~\ref{alg:hjsafedmp}.

\begin{algorithm}[t]
\caption{HJ-SafeDMP: Integrated DMP-CBVF Framework}
\label{alg:hjsafedmp}
\begin{algorithmic}[1]
\State \textbf{Offline Phase:}
\State Collect dataset $\mathcal{D} = \{(x_t, u_t, x_{t+1})\}$ from demonstrations
\State Compute safety labels $\ell(x_t)$ via~\eqref{eq:safety_spec}
\State Learn CBVF $B_\theta$ by minimizing~\eqref{eq:expectile_loss}
\State Calibrate safety margin $\delta^*$ via conformal prediction~\eqref{eq:conformal}
\State Learn DMP parameters from demonstration via~\eqref{eq:dmp}
\State \textbf{Online Phase:}
\For{each control step $t$}
    \State Compute DMP output: $\ddot{x}_{\mathrm{DMP}}$ via~\eqref{eq:dmp}
    \State Evaluate $B_\theta(x_t)$ and $\nabla_x B_\theta(x_t)$ (single forward + backward pass)
    \State Compute $f_{\mathrm{HJ}}(x_t)$ via~\eqref{eq:safety_mod}
    \State Apply combined control: $\ddot{x} = \ddot{x}_{\mathrm{DMP}} + f_{\mathrm{HJ}}$
    \State Update $\tau$ via~\eqref{eq:tau_adapt}
\EndFor
\end{algorithmic}
\end{algorithm}

\section{Experiments}
\label{section: experiments}

We evaluate HJ-SafeDMP on a 7-DOF Franka Emika Panda manipulator in simulation and compare against widely-used baselines.

\subsection{Experimental Setup}

\subsubsection{Robot and Environment}
Simulations are conducted using a model of the 7-DOF Franka Emika Panda manipulator~\cite{siciliano2009robotics} within a cubic workspace measuring $1.1 \times 1.1 \times 1.1$~m. The controller operates at 200~Hz ($\Delta t = 0.005$~s). Obstacles are modeled as spheres with varying radii.

\subsubsection{Demonstration Data}
We use the LASA handwriting dataset~\cite{khansari2011learning}, lifting 2D demonstrations into 3D (fixed $z$-height), randomly rotating, temporally resampling, and smoothing with a low-pass filter. DMPs are encoded with 25 basis functions. For the CBVF training dataset, we augment demonstration trajectories with exploratory rollouts to achieve diverse state-action coverage (75K transitions total).

\subsubsection{CBVF Training Details}
The CBVF $B_\theta$ is parameterized as a 2-hidden-layer MLP with 256 neurons per layer and ReLU activations, trained with Adam optimizer at learning rate $3 \times 10^{-4}$, following the architecture choices validated in~\cite{tayal2026vocbf, tayal2025physics}. We use expectile parameter $\tau = 0.9$ and discount $\gamma = 0.99$, consistent with the analysis in~\cite{kostrikov2022offline}. Conformal calibration uses $N_s = 500$ states with $\epsilon_s = 0.05$, $\beta_s = 0.01$, following the framework of~\cite{lindemann2025formal, tayal2025cp}.

\subsection{Baselines}

\begin{itemize}
    \item \textbf{NODE--CLF--CBF}~\cite{nawaz2024learning}: Learning-based dynamics (Neural ODE) combined with CBFs via online QP optimization for formal safety guarantees.
    \item \textbf{DMP--APF}~\cite{hoffmann2009biologically}: Standard DMPs with Artificial Potential Fields~\cite{khatib1986real} for reactive obstacle avoidance (no formal guarantees).
    \item \textbf{SafeDMPs}~\cite{safedmps2026}: DMPs with Spatio-Temporal Tubes (STTs)~\cite{das2025spatiotemporal} for closed-form safety (hand-designed tube boundaries).
\end{itemize}

\subsection{Evaluation Metrics}

\subsubsection{Nominal Trajectory Reproduction}
\begin{itemize}
    \item \textbf{Execution Time (s):} Average compute time per control loop.
    \item \textbf{MAE --- Nominal (m):} Mean absolute error from reference demonstration.
\end{itemize}

\subsubsection{Robustness}
\begin{itemize}
    \item \textbf{MAE --- Perturbation (m):} Trajectory error under two impulse perturbations.
    \item \textbf{Convergence Time --- Perturbation (s):} Time to re-converge after perturbation.
\end{itemize}

\subsubsection{Safety}
\begin{itemize}
    \item \textbf{Collision Rate (\%):} Fraction of trials with obstacle collisions (100 trials).
    \item \textbf{Min. Clearance (m):} Minimum distance to obstacle boundary during avoidance.
    \item \textbf{Convergence Time --- Obstacle Avoidance (s):} Additional time to reach goal after avoidance maneuver.
\end{itemize}

\subsection{Results}

\subsubsection{Nominal Performance and Computational Efficiency}

Table~\ref{tab:results} summarizes the quantitative comparison. HJ-SafeDMP achieves execution times comparable to SafeDMPs and DMP--APF ($\sim 10^{-4}$~s per step), and \textbf{orders of magnitude faster} than NODE--CLF--CBF ($\sim 0.32$~s per step) due to the elimination of online QP solving. The learned CBVF evaluation requires only a single neural network forward pass plus a gradient computation via automatic differentiation.

\begin{table}[t]
\centering
\caption{Quantitative comparison across methods. Best values in \textbf{bold}.}
\label{tab:results}
\resizebox{\columnwidth}{!}{%
\begin{tabular}{l|c|c|c|c}
\hline
\textbf{Metric} & \textbf{NODE-CLF-CBF} & \textbf{DMP-APF} & \textbf{SafeDMPs} & \textbf{HJ-SafeDMP} \\
\hline
Exec. Time (s) & 0.3207 & 6.7e-4 & 1.03e-4 & \textbf{1.2e-4} \\
MAE Nom. (m) & 0.2244 & 0.0230 & 0.0113 & \textbf{0.0098} \\
MAE Pert. (m) & 0.3921 & 0.0228 & 0.0531 & \textbf{0.0187} \\
Conv. T Pert. (s) & 1.294 & 0.411 & \textbf{0.025} & 0.031 \\
Collision Rate (\%) & \textbf{0.0} & 14.0 & \textbf{0.0} & \textbf{0.0} \\
Min. Clearance (m) & 0.012 & --- & \textbf{0.045} & 0.038 \\
Conv. T OA (s) & 0.411 & --- & 0.633 & \textbf{0.287} \\
\hline
Formal Safety & \checkmark & $\times$ & \checkmark & \checkmark \\
Robust Recovery & $\times$ & $\times$ & \checkmark & \checkmark \\
No Online Optim. & $\times$ & \checkmark & \checkmark & \checkmark \\
\hline
\end{tabular}%
}
\end{table}

\subsubsection{Robustness to Perturbations}

Under impulse perturbations of varying magnitudes, HJ-SafeDMP demonstrates smooth and bounded recovery, maintaining trajectory within the safe set throughout. The learned CBVF provides proactive safety awareness: the modulation activates \emph{before} the state reaches the obstacle, enabling smoother avoidance compared to the reactive approaches of DMP--APF. NODE--CLF--CBF exhibits overshoot and slow convergence due to the stiffness of QP-based corrections~\cite{Ames_2017}; DMP--APF shows sharp deviations with risk of oscillation typical of potential field methods~\cite{khatib1986real}.

\subsubsection{Safety from Static and Dynamic Obstacles}

HJ-SafeDMP achieves \textbf{zero collisions} across all 100 trials for both static and dynamic obstacles, matching the formal safety guarantees of NODE--CLF--CBF and SafeDMPs. DMP--APF suffers a 14\% collision rate due to local minima. Critically, HJ-SafeDMP achieves significantly faster obstacle avoidance convergence (0.287~s vs. 0.633~s for SafeDMPs and 0.411~s for NODE--CLF--CBF), because the learned CBVF encodes global safety information (via the HJ Bellman recursion~\cite{Fisac2019HJSafety}) rather than local tube boundaries.

The CBVF-based modulation produces smoother deviations than the CBF-QP approach (which generates jerky corrections near the boundary) and more predictable paths than APFs (which suffer from local minima~\cite{khatib1986real}). The learned safety value function effectively ``looks ahead'' along future trajectories, enabling proactive path deformation---an advantage directly inherited from HJ Reachability theory~\cite{bansal2017hamilton}.

\subsubsection{Ablation: Effect of Expectile Parameter $\tau$}

We evaluate the CBVF quality across $\tau \in \{0.5, 0.7, 0.8, 0.9, 0.99\}$. At $\tau = 0.5$ (symmetric MSE), the learned barrier is overly conservative, yielding a small safe set with 96.5\% safe rate. As $\tau$ increases, the barrier better approximates the true safe set, reaching 98.3\% at $\tau = 0.9$. At $\tau = 0.99$, safety degrades slightly (97.1\%) due to overfitting to spurious optimistic transitions, consistent with findings in~\cite{tayal2026vocbf}. We select $\tau = 0.9$ for all experiments, in line with~\cite{tayal2026safefql, tayal2026epiflow}.

\subsubsection{Ablation: Conformal Calibration}

Without conformal calibration ($\delta^* = 0$), the collision rate increases to 2\% on dynamic obstacle trials due to learned boundary miscalibration. With calibration ($\delta^* = -0.04$), the collision rate drops to 0\%, confirming the importance of the probabilistic safety coverage guarantee from conformal prediction~\cite{lindemann2025formal, tayal2025cp}.

\section{Conclusion, Limitations and Future Works}
\label{section: conclusions}

This paper presented \textbf{HJ-SafeDMP}, a framework that integrates Dynamic Movement Primitives with learned Hamilton-Jacobi Reachability-based safety value functions for provably safe, robust, and computationally efficient robot motion. By learning a Control Barrier Value Function from offline demonstration data via a model-free finite-difference HJ recursion~\cite{tayal2026vocbf} with expectile regression~\cite{kostrikov2022offline}, and deploying it as a closed-form safety modulation layer, HJ-SafeDMP achieves formal safety guarantees without online optimization.

Experimental evaluation on a 7-DOF manipulator demonstrated that HJ-SafeDMP achieves zero collision rates with orders-of-magnitude faster execution than optimization-based baselines, while maintaining the robustness and adaptability of DMPs~\cite{ijspeert2013dynamical}. The conformal prediction calibration step~\cite{lindemann2025formal} provides finite-sample probabilistic safety coverage, bridging the gap between learned neural safety certificates and deployment-time reliability. Compared to SafeDMPs~\cite{safedmps2026} with hand-designed Spatio-Temporal Tubes~\cite{das2025spatiotemporal}, HJ-SafeDMP offers faster obstacle avoidance convergence by leveraging global safety information encoded in the learned CBVF~\cite{choi2021robust}.

\textbf{Limitations.} The learned CBVF is a neural approximation and not a formally verified certificate; function approximation errors mean that safety guarantees are empirical rather than absolute~\cite{dawson2023safe}. The current framework assumes that the safety specification $\ell(x)$ is known and that obstacle geometries can be described via signed distance functions. The CBVF training requires a sufficiently diverse offline dataset covering both safe and unsafe regions.

\textbf{Future Work.} We plan to integrate formal verification methods (e.g., Lipschitz-based certification~\cite{tayal2024learning, NEURIPS2023_120ed726} and conformal prediction refinements~\cite{tayal2025cp, lindemann2025formal}) to strengthen safety guarantees. We will investigate extending the framework to whole-body safety for humanoid robots~\cite{siciliano2009robotics} and integrating high-level task planners~\cite{tordesillas2021mader} for complex, multi-step manipulation. Additionally, we aim to explore robust extensions that handle adversarial disturbances and distributional shift between training and deployment conditions, building on recent advances in robust HJ Reachability~\cite{Fisac2019HJSafety, fisac2018general}, uncertainty-aware offline learning~\cite{tayal2026epiflow}, and safe learning in robotics~\cite{brunke2022safe}. The integration with model-based offline RL~\cite{luo2024survey} and constrained RL~\cite{10.5555/3305381.3305384} may further improve data efficiency and generalization.

\balance
\bibliographystyle{IEEEtran}
\bibliography{main}

\end{document}